
\typeout{Infochain: A Decentralized, Trustless and Transparent Oracle on Blockchain}


\documentclass{article}
\pdfpagewidth=8.5in
\pdfpageheight=11in
\usepackage{ijcai20}

\usepackage{times}
\usepackage{soul}
\usepackage{url}
\usepackage[utf8]{inputenc}
\usepackage[small]{caption}
\usepackage{graphicx}
\usepackage{amsmath}
\usepackage{amsthm}
\usepackage{booktabs}
\usepackage{bbold}
\urlstyle{same}

\usepackage{graphicx}
\usepackage{amsmath}
\captionsetup{compatibility=false}

\usepackage{bbm}
\usepackage{mathtools}
\usepackage{url}
\usepackage{enumitem}

\usepackage{graphicx}

\usepackage{times}
\usepackage{soul}
\usepackage{url}
\usepackage[utf8]{inputenc}
\usepackage{amsmath}
\usepackage{booktabs}
\usepackage{algorithm}
\usepackage{algorithmic}

\usepackage{pbox}
\usepackage{subcaption}
\usepackage{multirow}
\usepackage{mathrsfs} 
\usepackage{bbm}
\usepackage{colortbl}
\usepackage{commath}
\usepackage{centernot}
\usepackage{amssymb}
\usepackage{mathtools}
\usepackage{afterpage}

\newtheorem{thm}{Theorem}

\newtheorem{proposition}{Proposition}


\renewcommand{\epsilon}{\varepsilon}






\title{Infochain: A Decentralized, Trustless and Transparent Oracle on Blockchain}

\author{
	Naman Goel$^1$\thanks{The two authors contributed equally.}
	\and
	Cyril van Schreven$^1$\footnotemark[1]\and
	Aris Filos-Ratsikas$^2$\And
	Boi Faltings$^{1}$
	\affiliations
	$^1$Swiss Federal Institute of Technology, Lausanne (EPFL)\\
	$^2$University of Liverpool, UK\\
	\emails
	\{naman.goel, boi.faltings\}@epfl.ch, cyril.schreven@protonmail.com,
	aris.filos-ratsikas@liverpool.ac.uk
}

\begin{document}

\maketitle

\begin{abstract}
Blockchain based systems allow various kinds of financial transactions to be executed in a decentralized manner. However, these systems often rely on a trusted third party (oracle) to get correct information about the real-world events, which trigger the financial transactions. In this paper, we identify two biggest challenges in building decentralized, \textit{trustless}\footnote{Trustless is a term increasingly used in the context of decentralized and blockchain systems meaning \textit{not requiring trust}.} and transparent oracles. The first challenge is acquiring correct information about the real-world events without relying on a trusted information provider. We show how a peer-consistency incentive mechanism can be used to acquire truthful information from an untrusted and self-interested crowd, even when the crowd has outside incentives to provide wrong information. The second is a system design and implementation challenge. For the first time, we show how to implement a trustless and transparent oracle in Ethereum. We discuss various non-trivial issues that arise in implementing peer-consistency mechanisms in Ethereum, suggest several optimizations to reduce gas cost and provide empirical analysis.
\end{abstract}

\section{Introduction}
With the increasing popularity of the blockchain technology, the implementation of commercial and governmental systems has witnessed a large shift towards distributed and decentralized approaches. In particular, the emergence of the Ethereum platform has given rise to the development of several applications, often referred to as \emph{decentralized apps} or \emph{DAPs}, which aim to apply this latter principle to many areas such as finance, education, intellectual property or government. At the heart of these approaches lies the concept of the \emph{smart contract}, i.e., lines of code that contain the terms of the agreement between the involved parties, which are automatically executed once triggered by events happening in the real world. For example, consider the case of a web service, which is typically dictated by a service level agreement (SLA) between the service provider and the clients. The SLA can be coded into a smart contract between the involved parties which will trigger an automatic payment upon detection of a violation. For instance, if the service guarantees a response time of at most $1$ second with high probability, frequent slower responses would trigger automatic compensation. An important issue here is, \emph{how to determine whether the real-world event has actually happened}. In the above example, this means how to determine that the SLA has been violated? We use the case of the web service only as a simple running example but this is in fact a \emph{fundamental challenge in developing information infrastructure for FinTech}. 

The need for trusted information about a real-world event that triggers some conditional financial transactions arises in applications ranging from insurance, banking, trade, governance and law etc. The entities responsible for acquiring such data about the real-world events are called \emph{oracles}. Existing solutions include Town Crier and Chainlink among others. Traditionally, oracles are implemented using trusted third party data sources responsible for acquiring the information. However, besides the fact that such an approach is in conflict with the decentralized nature of the blockchain technology, it is also prone to problems such as trustworthiness and cost.

An alternative solution would be to appeal to the ``wisdom of the crowds'' and ask the users themselves about the information (for e.g., the quality of service received). The idea has also been proposed for outcome resolution in decentralized prediction markets like Augur and Gnosis. While this approach is more decentralized in nature, it poses a significant challenge: the agents can not be relied on to provide correct information. The task of eliciting information from self-interested agents is one of the fundamental problems in game theory, and has been extensively studied. In the center of these investigations lies the literature on peer-consistency mechanisms~\cite{faltings2017game}; these are game-theoretic mechanisms that incentivize agents to report the information truthfully, even if the information is unverifiable. 

In this paper, we propose the employment of peer-consistency mechanisms for the design of trustless, decentralized oracles. This seems like a natural choice, as the usefulness of the oracles is dependent on the quality of the supplied information, which needs to be truthfully elicited from the agents. However, this quest imposes two major challenges:

\begin{itemize}
	\item For a peer-consistency mechanism to actually work, the agents must be convinced of its incentive properties, contrary to the traditional case, where the implementation of the mechanism is done by a trusted third-party. \emph{How can one implement the incentive scheme in a transparent and trustless manner, what is the cost and how can we optimize this cost?}
	
	\item In many financial settings, agents also have incentives to lie about their true observations and provide false information. In the web service example, the clients would have an incentive to always report ``bad'' response times, in order for the conditions of the smart contract to be violated in their favor. \emph{How large do the incentives have to be, to counteract the lying incentives, and is the approach economically feasible?}
\end{itemize}

\noindent In this paper, we address the above questions. We summarise our contributions below.
\begin{enumerate}
		\item We design and implement \textsc{Infochain}, a completely decentralized peer-consistency based truthful information collection system in Ethereum. We address the following technical challenges in its implementation.
	
	\begin{itemize}
		\item Writing data and performing computation on Ethereum's Virtual Machine (EVM) is expensive. Information providers must be compensated for this cost, increasing the overall cost of information acquisition. For the first time, we discuss several non-trivial ways of implementing three different peer-consistency mechanisms in Solidity (Ethereum's programming language) and empirically compare their costs.
		
		\item While transparency is a desired inherent feature of blockchain, the peer-consistency mechanisms are compromised if an agent can see the information submitted by their ``peers" before submitting their own information. We propose to use a \emph{commit-reveal protocol} to address this challenge.
		
		\item In order to reduce computation complexity, peer-consistency mechanisms use only one (or a few) randomly selected peer(s) for every agent. However, if the random peer(s) can be predicted, the agents get an opportunity to collude and the mechanisms can be compromised, and this risk is increased by the transparency of blockchain. We show that under reasonable assumptions, random peer selection can still be implemented safely.
	\end{itemize}

	\item We analyze the settings when agents have outside incentives to lie.  We formally show that even in the presence of such outside incentives peer-consistency can be used to elicit the truth by choosing an appropriate constant to scale the rewards. We show that normally the payments required are a small fraction of the outside incentive.
\end{enumerate}

\subsection{Related Work}
Many decentralized systems have been proposed for crowdsourcing and information trading~\cite{an2019truthful,xiong2019smart,lu2018zebralancer,li2018crowdbc} but none addresses the issue of providing quality based incentives for information. In a recent and independent work, \cite{kong2019securely} also use peer-consistency for \textit{trust-free} data trading systems but the analysis in this theoretical paper focuses on a secure multi-party computation protocol for rewarding information that loses value if revealed. \cite{adler2018astraea} propose a system for a decentralized oracle but it requires a ``random assignment" of questions to agents, which has the drawback that agents may be asked to answer questions that they may have no information about.

The literature on peer-consistency is long and extensive \cite{miller2005eliciting,prelec2004bayesian,waggoner2014output,gao-new,agarwal-2017,liu2017machine,goel2018deep,goel2019personalized}.; we refer the reader to \cite{faltings2017game} for a detailed exposition of the main results. Most relevant to ours are the results related to incentives which are dependent on the outcome, e.g., \cite{chakraborty2016trading,chen2011market,freeman2017crowdsourced}. These differ from our theoretical contributions in the fact that they only apply to the specific domain of prediction markets.

\section{Infochain}
\begin{figure*}[ht]
	\centering
	\includegraphics[width=1\linewidth]{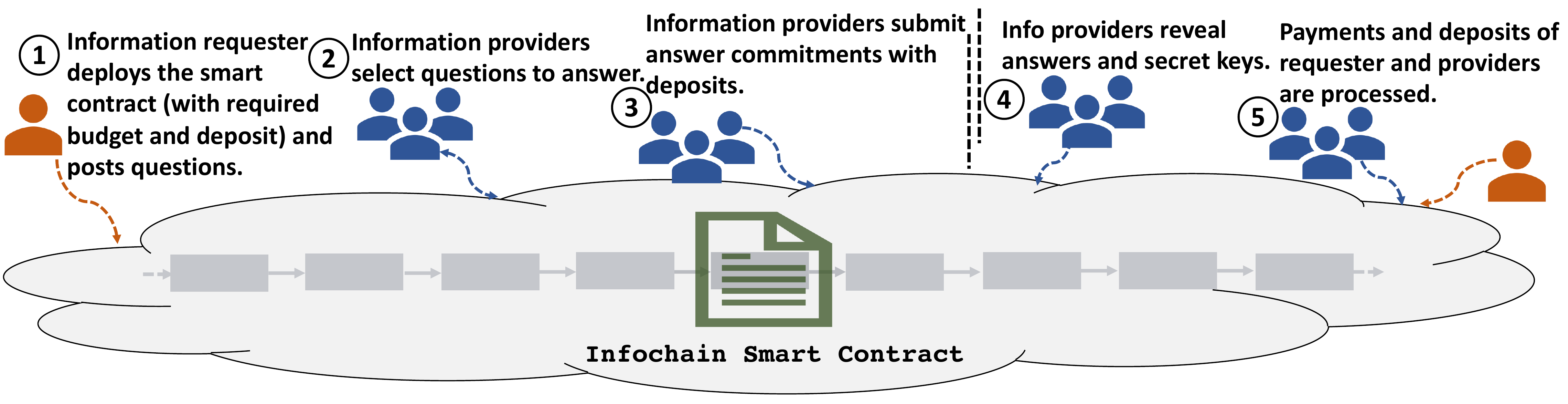}
	\caption{Infochain Overview}
	\label{fig:design}
\end{figure*}
To collect truthful information from self-interested agents, we propose a completely decentralized, transparent and trustless system called Infochain. Infochain enables information requesters to post questions, which can be selected by information providers (agents). The questions can be, for example, of the following form: ``Is the \texttt{responseTime} of web service $W$ less than $1$ second?". Once the agents submit information for the questions they select to answer, their payments in Ether are processed by a smart contract. All the collected information and payments are stored on a public blockchain to ensure transparency and immutability.
\paragraph{Peer-consistency.}A crucial step in eliciting trustworthy information from self-interested agents is aligning their incentives with honest behavior. Unfortunately, a naive incentive mechanism may invite free riders who submit random information. Designing truthful incentive mechanisms is a hard problem when there is no way to verify the correctness of the information. This issue has been addressed by game theoretic peer-consistency mechanisms~\cite{faltings2017game}. The broad idea in these mechanisms is to reward the agents by ``matching" the information provided by multiple agents, while discouraging any collusion. The state of the art peer-consistency mechanisms guarantee that truth-telling strategy is the highest paying equilibrium and other equilibria are less profitable. We consider three fundamental peer-consistency mechanisms in this paper.

\begin{enumerate}
	\item \textbf{The Output Agreement (OA) Mechanism~\cite{waggoner2014output}:} This is perhaps the simplest of all peer-consistency mechanisms. In the OA mechanism, an agent gets a reward of 1 unit only if her answer for a question matches the answer of her peer for the same question. The reward of the agent for a question is the average over the rewards earned by matching with all peers. The final reward of the agent is the average of her rewards from all the questions answered by her.
	
	\item \textbf{The Dasgupta and Ghosh (DG) Mechanism~\cite{dasgupta2013crowdsourced}:} In the DG mechanism, an agent gets a reward of 1 unit if her answer for a question matches the answer of her peer for the same question but also gets a penalty of 1 unit if her answers match the answers of the peer on non-common questions. The DG mechanism requires that two agents, who are peers of one another, must also have some non-common questions that are answered by one of them but not by both. The final reward is calculated by averaging as described in the OA mechanism. The \textbf{Correlated Agreement} mechanism~\cite{shnayder2016informed} is a generalization of the DG mechanism and exhibits similar computations.
	
	\item  \textbf{The Peer Truth Serum for Crowdsourcing(PTSC)~\cite{radanovic2016incentives}:} In PTSC, the reward of an agent $i$ for a question is calculated using the following formula:
	$$
	\begin{dcases*}
	\alpha \cdot \Big(\frac{\mathbb{1}_{y = y'}}{R_i(y)} - 1\Big)& if $R_i(y) \neq 0$\\
	0 & if $R_i(y) = 0$
	\end{dcases*}
	$$
	where $y$ is the answer submitted by the agent and $y'$ is the answer submitted by her peer for the same question. $\alpha$ is a strictly positive scaling constant.
	
	$R_i(y) = \text{num}_i(y)/\sum\limits_{\bar{y} \in \{0,1\}} \text{num}_i(\bar{y}),$
	where $\text{num}_i(y)$ is a function that counts occurrences of $y$ in the answers of all agents (except $i$) across all questions. The final reward is calculated by averaging discussed earlier.
\end{enumerate}

Traditionally, these mechanisms are implemented by a centralized trusted third party. Implementing them in Infochain, which doesn't assume any centralization or trust, is challenging. In the following subsections, we address the main implementation and theoretical challenges. An overview of Infochain is provided in Figure~\ref{fig:design}.  

\subsection{Commit-Reveal Protocol} Transparency is an inherent feature of blockchain. Thus, all the information submitted by an agent is visible to all others. The peer-consistency mechanisms guarantee their incentive compatibility assuming that an agent can only form a belief about what her peers are going to report but doesn't know the actual report of peers. We ensure this in Infochain by making the agents follow a commit-reveal protocol:
\begin{enumerate}
	\item \textbf{Commit:} An agent writes her commitment $keccak256(y, k)$ on the chain, where $y$ is the agent's answer for a given question and $k$ is her secret key.
	\item \textbf{Reveal:} Once all agents who have selected a question, have finished submitting their commitments for the question or the commitment phase expires, they can reveal their respective secret keys and answers. If the commitment of an agent matches her revealed answer, the answer is written on the chain, otherwise it is discarded.
\end{enumerate}

\begin{figure*}[!ht]
	\centering
	\begin{subfigure}{0.24\linewidth}
		\centering
		\includegraphics[width=1\linewidth]{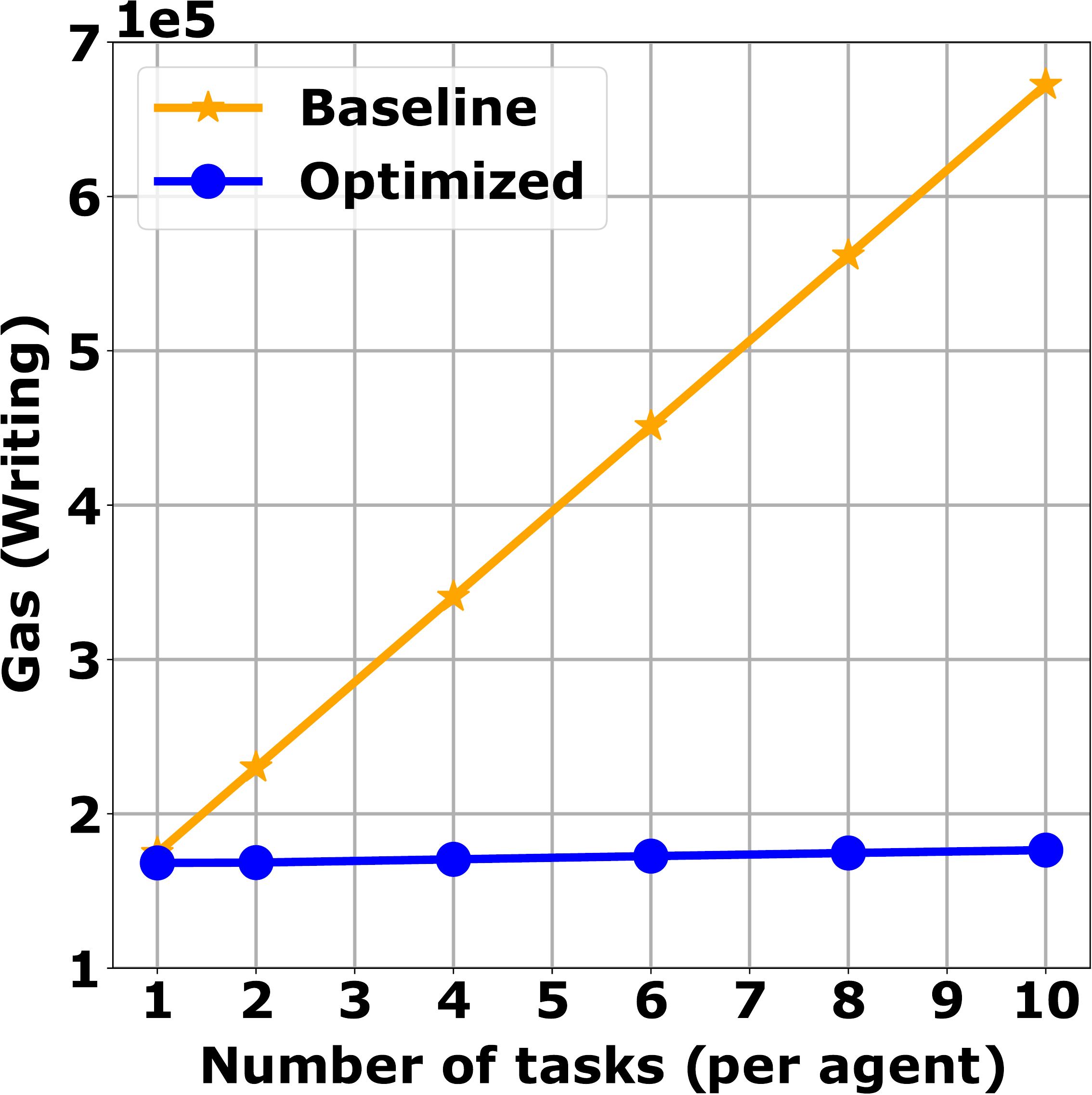}
		\caption{Writing Cost Optimization}
		\label{fig:2_optimize_writing}
	\end{subfigure}~
	\begin{subfigure}{0.24\linewidth}
		\centering
		\includegraphics[width=1\linewidth]{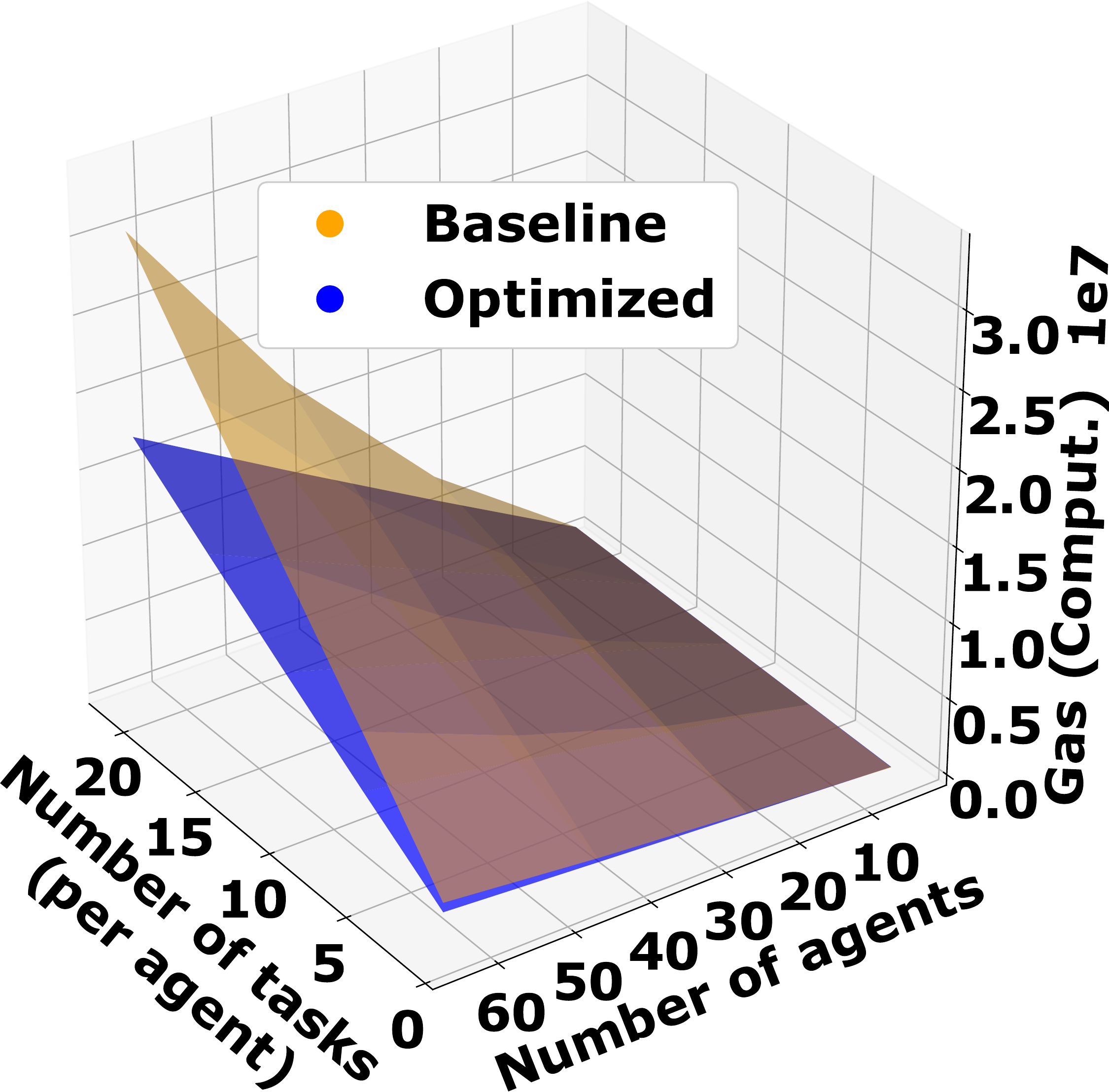}
		\caption{Comput. Cost Optimization}
		\label{fig:1_optimize_computation}
	\end{subfigure}
	\begin{subfigure}{0.24\linewidth}
		\centering
		\includegraphics[width=1\linewidth]{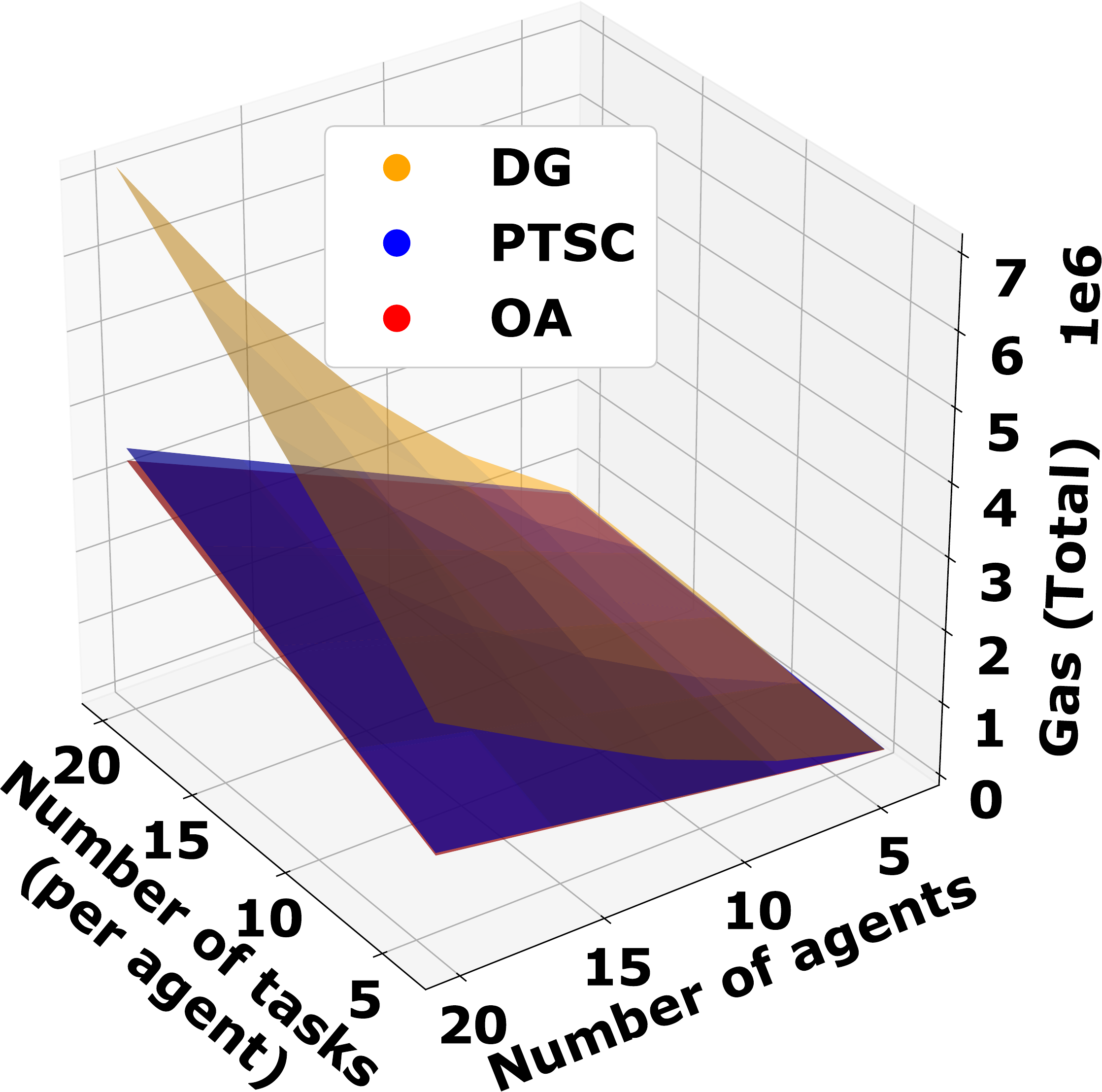}
		\caption{Comparing Mechanisms}
		\label{fig:compare_mech}
	\end{subfigure}
	\begin{subfigure}{0.24\linewidth}
		\centering
		\includegraphics[width=1\linewidth]{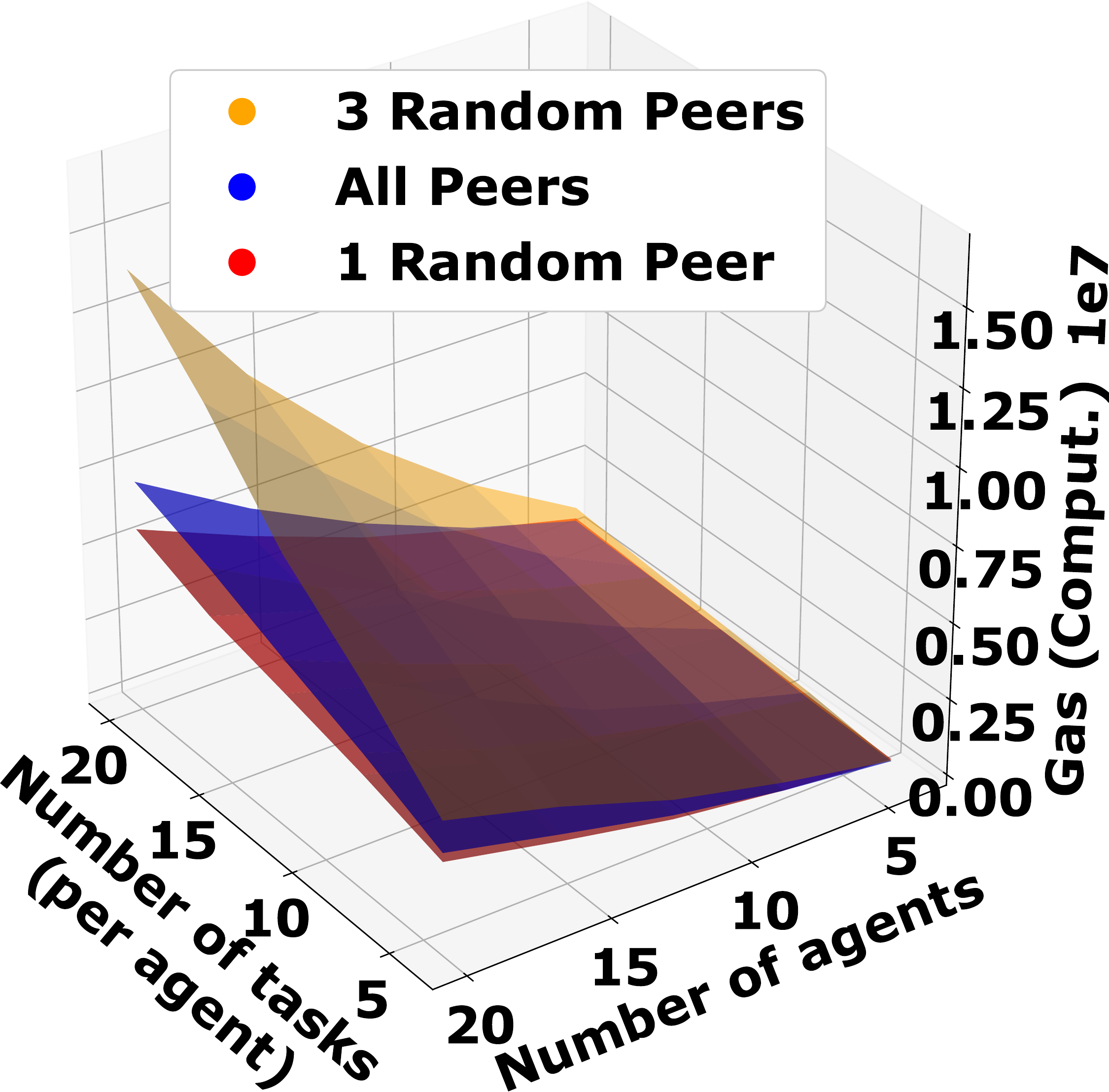}
		\caption{Random Peer Selection (DG)}
		\label{fig:randomness}
	\end{subfigure}
	\caption{Experimental Results}
	\label{fig:results}
\end{figure*}
\subsection{Cost Optimizations}\label{sec:opt}
 Performing computations on Ethereum's Virtual Machine (EVM) remains an expensive affair. Computation costs on EVM are roughly $10^8$ times higher than AWS\footnote{https://aws.amazon.com/blockchain/}. \cite{ryan} provides a good summary about the costs of basic arithmetic operations and writing operations for different data types. Agents who provide information must be compensated for this cost, increasing the overall cost of information acquisition. We discuss below several non-trivial ways of implementing three different peer-consistency mechanisms in Solidity so that the costs can be minimized.

\begin{enumerate}
	\item \textbf{Optimizing Writing Cost}: To minimize the costs of writing on the chain, agents on Infochain combine multiple answers in the form of a bit vector. This is motivated by two observations. First, the answers are revealed simultaneously and thus, they do not require separate commitments. Second, the EVM operates on 256 bit words, thus a single bit vector is much cheaper to write than other formats. 
	\begin{proposition}
	With the above scheme, each 256-bit commitment can contain up to 42 answers.
	\end{proposition}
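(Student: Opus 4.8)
The plan is to treat the statement as a packing/counting argument rather than anything deep. Everything hinges on identifying the number of bits that a single answer occupies once it has been placed into the committed bit vector described in the preceding paragraph; the number $42$ is then simply a floor division of the $256$-bit EVM word size by that per-answer cost.

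First I would make precise what must be encoded for each answer. Under the commit--reveal protocol an agent commits to all of her answers at once as a single \texttt{keccak256} value, and in the reveal step the contract must be able to (i) recover each binary answer and (ii) attribute it to the correct question so that the peer-consistency reward is well defined. This forces a small fixed number $c$ of bits per answer: the answer bit together with the auxiliary bits the protocol attaches to it. I would then argue that $c = 6$ for the scheme as implemented, observing that this is in fact the only value compatible with the proposition, since $\lfloor 256/5 \rfloor = 51$ and $\lfloor 256/7 \rfloor = 36$; thus ``up to $42$ answers'' is logically equivalent to ``exactly $6$ bits per answer, with nothing but at most the $4$ leftover bits also stored in the word.''

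Next I would do the counting. The committed data occupies one $256$-bit EVM word, so fitting $n$ answers requires $6n \le 256$, i.e. $n \le \lfloor 256/6 \rfloor = 42$. For tightness I would exhibit the explicit layout with $n = 42$: it consumes $6 \cdot 42 = 252$ bits and leaves $4$ bits unused, whereas $n = 43$ would need $258 > 256$ bits; hence ``up to $42$'' is both valid and best possible.

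The main obstacle is not the arithmetic but justifying the per-answer bit budget: I must argue that the scheme genuinely cannot encode an answer in fewer than $6$ bits (otherwise more than $42$ would fit) and does not need more (otherwise fewer would fit), and I should double-check that no fixed per-word header is consumed on top of the answers, since a header of $h$ bits would change the bound to $\lfloor (256 - h)/6 \rfloor$. Reconciling $c = 6$ with the Solidity word semantics and the operation costs cited earlier is where the real content of the proof lies; once $c = 6$ and ``no header'' are pinned down, the proposition is immediate.
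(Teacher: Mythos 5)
There is a genuine gap here, and it lies exactly where you locate ``the real content of the proof'': you never actually derive the per-answer bit budget, you reverse-engineer it from the claimed answer ($c=6$ because $\lfloor 256/6\rfloor = 42$), which is circular. Worse, the premise of your counting is wrong: the binding constraint is not the $256$-bit EVM word holding the answers directly. The paper's argument is a commitment-security argument. The agent commits by publishing $keccak256(\mathcal{S}\,\|\,m)$, where $\mathcal{S}$ is a random secret key; for the commitment to be binding with failure probability $\approx 2^{-k}$, the scheme takes both the salt $\mathcal{S}$ and the message $m$ to be $k$ bits where the hash output is $3k$ bits. With keccak256's $256$-bit output this caps the committed message $m$ at roughly one third of $256$, i.e.\ $85$ bits --- not $256$. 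Each answer then costs $2$ bits (one flag bit recording whether the question was answered at all, one bit for the binary answer itself), giving $\lfloor 85/2\rfloor = 42$.

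So the coincidence $\lfloor 256/6\rfloor = \lfloor 85/2\rfloor = 42$ makes your arithmetic land on the right number, but the decomposition ``$256$ bits divided by $6$ bits per answer'' does not correspond to anything in the scheme, and your own text concedes you cannot justify $c=6$ from the protocol. To repair the proof you need the two missing ingredients: (i) the $85$-bit message bound coming from the hiding/binding analysis of the hash commitment, and (ii) the $2$-bit-per-answer encoding (answered flag plus answer value).
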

	\begin{proof}
		Given a hash function $\mathcal{H}$ with a $3k$ bit output, to commit the $k$ bit message $m$, Alice generates a random $k$ bit string $\mathcal{S}$ and sends Bob $\mathcal{H}(\mathcal{S}||m)$. The probability that any $\mathcal{S}'$, $m'$ exist where $m' \neq m$ such that $\mathcal{H}(\mathcal{S}' || m') = \mathcal{H}(\mathcal{S} || m)$ is $\approx 2^{-k}$. The size of the message sent is limited to one third the size of the output of the hashing function, thus $85$ bits.
		Each answer requires $2$ bits: the first determines if the question was answered and the second is the answer. Therefore each commitment can contain 42 answers.
	\end{proof}

This optimization helps both commit and reveal phases.
	
	\item \textbf{Optimizing Computation Cost}: To reduce the cost of computing the rewards, a set of so-called intermediary values is introduced. These values naturally appear at intermediary states of reward computation. They will be precomputed and reused for each agent. What these intermediary values are, depends on the peer-consistency mechanism. This approach allows for the computation to traverse the data a minimum number of times. Since all rewards are computed at the same time, these intermediary values don't need to be written on the blockchain and can be kept in memory instead.
	
	For an example, consider the PTSC mechanism, which requires relative frequency $R_i(y)$ of the value $y$ while excluding the answer given by agent $i$. This quantity need not be calculated from scratch for every agent or when every new answer is submitted and neither it is required to be written on the chain. The intermediary values (for e.g. running average) can be kept in memory and used to calculate or update  $R_i(y)$ as required. 
\end{enumerate}

\subsection{Random Peer Selection}
In peer-consistency, we can use only one or a few randomly selected peers for reward calculation instead of all peers. This is because, in expectation, the rewards of the agents remain unchanged and thus, the mechanisms with randomly selected peers also offer the same incentive compatibility (except that the variance in rewards increases). This is an interesting tradeoff between computation cost and variance in rewards. However, random peer selection on blockchain is subtle mainly due to the fact that nothing on the chain is a ``secret", including the seed for random number generation. If random peers can be known in advance, it may increase the risk of collusion between the agents compromising the incentive compatibility of the mechanisms. In Infochain, we use the block timestamps as well as the mining difficulty level as the seed. This avoids using any trusted third party for random peer selection. The approach works under the assumption that the miners will not try to cheat the smart-contract, which is a reasonable assumption given that the miners have no incentive to do so since they risk losing their mining rewards. The assumption can be violated in extreme scenarios where the financial activity on Infochain (for e.g. the incentive amounts) exceed the mining rewards.

\subsection{Negative Payments}
The DG mechanism and the PTSC allow negative payments, which is implemented in Infochain by making agents submit refundable deposits. Information requesters also deposit the payment budget and an additional refundable deposit. Any outstanding deposits of the agents and the requester are returned after the payments and computations costs are settled.

\section{Experiments}
We now discuss the results of some experiments performed on Infochain. The performance measure of interest in this discussion will be the total amount of gas used. Gas is a unit measuring the computational work of running transactions or smart contracts in the Ethereum network and is a good proxy for the cost in USD. Infochain has been deployed and tested on the Ropsten Test Network, one of the commonly used public testing framework for Ethereum smart contracts. To have no limitations in terms of gas, the results reported in this paper have been generated on a local instance of Ethereum.

\paragraph{Dataset Description.}For this experiment, we used a public dataset~\cite{zheng2014investigating} containing real-world quality of service evaluation from 339 trusted agents for 5,825 web services. The agents observe the response time (in seconds) of the web-services. The real valued observations were placed into two categories (``good'' and ``bad''), in order to fit them to our binary observation setting. We treated a response time of at most 1 second as ``good'' and the rest as ``bad''. This dataset acts as the ground truth data that the information requester is interested in eliciting from self-interested agents. We simulated agent behavior as follows: $50\%$ of the agents report truthfully, $25\%$ report randomly (i.e. independent of the ground truth) and the rest report in an adversarial way (i.e. opposite of the truth).

\paragraph{Results.}In Figure~\ref{fig:2_optimize_writing}, we show the reduction in writing cost due to the proposed optimization discussed in Section~\ref{sec:opt} as compared to the baseline implementation (without any proposed optimizations). Tasks in the figures refer to the questions that the agents answer. As expected, the reduction becomes more significant as agents answer more questions since the optimization can pack more and more answers into a single write operation. It may be worth noting that the optimization doesn't make writing cost independent of the number of answers as the figure may suggest. Since the number of questions in the figure doesn't exceed $42$, the cost remains same as number of questions increase. Figure~\ref{fig:1_optimize_computation} shows the reduction in computation cost due to the proposed optimizations with varying number of agents and number of questions per agent. The figure was plotted based on the numbers obtained with the PTSC mechanism but we observed a similar trend for the OA and the DG mechanisms. We next compare the cost of the three mechanisms in Figure~\ref{fig:compare_mech}. While the OA mechanism and the PTSC mechanism have similar cost, the DG mechanism is more costly. This is due to the fact that DG mechanism involves more operations, particularly for keeping track of questions that are not shared between agents. Finally, Figure~\ref{fig:randomness} shows the effect of using randomly selected peers for reward computation in the DG mechanism. We note that there may be multiple ways to implement sampling without replacement; for e.g., 1) randomly select a peer, check if it is already in the list of previously selected peers and repeat; and 2) sample from the list of not selected peers, update the list of not selected peers and repeat. The first method is not suitable for blockchains as there is no upper bound on the number of necessary random selections and thus the transaction may run out of gas. The results presented here correspond to the second method. As shown in Figure~\ref{fig:randomness}, the cost is guaranteed to reduce if we randomly select only one peer per agent. But when multiple peers are to be selected (which is required to reduce variance in rewards), the cost may increase to a level higher than the cost of using all peers without any random selection. The reason for this is that as we select more random peers, the cost of implementing random sampling exceeds the cost of simple implementation of just using all the peers.

\section{Countering Lying Incentives}
In the previous section, we explained how some popular peer-consistency mechanisms can be implemented in Ethereum, which is an essential step towards the design of decentralized oracles. The other crucial part is to ensure that they provide the incentives to the agents to report their observations truthfully, \textit{even if their inherent incentives are towards the other direction}. To make this more concrete, consider the example of the the web service, mentioned in the Introduction. Clearly, the agents have an incentive to report ``bad service'' so that they can be compensated by the service provider. We will show how one of the peer-consistency mechanisms of Infochain, the PTSC mechanism can actually be tuned to counter these \emph{outside} incentives. We remark that while the mechanism is known to be able to handle \emph{constant} extra incentives \cite{radanovic2016incentives}, like the cost of effort, it was not known whether it can be used for incentives that \emph{depend on the outcome and the reports of the other agents}. We establish such a result in this section, and we quantify the savings that the employment of the mechanism achieves, compared to the case of not applying any peer-consistency.

Formally, we consider settings in which there is a large number of questions to be answered, and each agent selects and answers a few of them. The answer space is defined by a binary variable, e.g., ``good service'' or ``bad service''. We will use $x_i \in \{0,1\}$ to denote the (private) observation of agent $i$ and $y_i \in \{0,1\}$ to denote its report. Since agents are rational, it might not be the case that $y_i = x_i$ but rather, $y_i$ will be some function of $x_i$. If $y_i = x_i$ we will say that the agent is being \emph{truthful}. Another important case is when $y_i = 0$ (regardless of $x_i$), where $0$ denotes ``bad service''. The \emph{outcome} $o_q$ for a question $q$ is defined as the fraction of the $n$ agents who reported $0$ as their feedback on that question. Based on the announced outcome, the agents (who submitted $0$ as feedback) are issued a \emph{refund payment} $c \cdot o_q$, i.e., proportional to the value of the outcome. 

Intuitively, the outcome is determined by the agents that claimed to be dissatisfied with the service and asked for a refund. In the web service example, this corresponds to the fraction of agents who report that the response time of the service was higher than the guarantee. It should be obvious that if we do not provide any extra incentives (i.e., in the absence of peer-consistency), every rational agent would report $y_i = 0$, in order to get compensated. We will prove that with the appropriate use of PTSC, one can make sure that being truthful is the (only) best option for an agent, assuming that other agents are also truthful. In game-theoretic terms, we will prove that being truthful is a \emph{strict equilibrium}.\footnote{Since the agents only have subjective beliefs about the observations of others, the appropriate equilibrium concept here is the \emph{subjective equilibrium~\cite{witkowski2012peer}}.} This will be achieved via an appropriate choice of the scaling constant $\alpha$ in the definition of PTSC.

We have analyzed similar settings in more detail in another paper~\cite{goel2020outside} assuming that every agent who submit a feedback is eligible for a refund. The following analysis in this paper is for a special case when only agents who submit $0$ as their feedback are eligible for a refund.

\paragraph{Beliefs and Belief Correlation.} The most important constituents of the peer-consistency framework are the agents' beliefs about the observations of their peers. We will let $P_i(x_{p} = x')$, for $x' \in \{0,1\}$, denote agent $i$'s (prior) belief about a randomly selected peer $p$'s observation $x_{p}$ on a question being $x'$. A standard assumption in the literature is that the priors are fully mixed, i.e $P_i(x_{p} = x') > 0, \forall x' \in \{0,1\}$. After the agent makes a private observation $x_{i}$ for a question, she updates her belief (posterior) about her peer's observation on that question only, to $P_i(x_{p} = x' | x_{i} = x)$. Given the beliefs of the agents, the following quantity will be useful:

$$	\small \beta = \min_{i} \Big(\frac{P_i(x_{p} = 1 | x_{i} = 1)}{P_i(x_{p} = 1)} - \frac{P_i(x_{p} = 0 | x_{i} = 1)}{P_i(x_{p} = 0)}\Big)
$$\smallskip

\noindent Intuitively, the quantity $\beta$ measures the correlation strength between the observations of agents. The assumption that $\beta > 0$ is standard in the literature of peer-consistency (e.g., see~\cite{jurca2005enforcing,witkowski2012peer}) and in fact, it is a prerequisite for the PTSC mechanism to guarantee truthful behavior. The assumption is rather obvious in binary answer settings: if an agent observes $1$, that can only increase her posterior belief about her stochastically-relevant peer also observing $1$. We will make the same assumption here, and we will use $\beta$ to quantify the scaling constant $\alpha$ that we need to use in PTSC, to overcome the lying incentives. \smallskip

\noindent We will also need the following quantity:
$$\gamma = \max_i P_i(x_p=0|x_i=1),$$
which measures the maximum over the posterior beliefs of any agent about her peer's observation being $0$, given that her own observation was $1$. 

\begin{thm}\label{thm:eqm}
	There is a value of the scaling constant $\alpha$ for which PTSC has a strict truthful equilibrium, even when agents have outside incentives. In particular, this is guaranteed when 
	$$\alpha > \frac{c \cdot \left(1 + (n-1)\gamma\right)}{n \cdot \beta}$$
\end{thm}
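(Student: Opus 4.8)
The plan is to fix an agent $i$, suppose every other agent reports truthfully, and show that $i$'s subjective expected utility is strictly larger under $y_i = x_i$ than under the only alternative $y_i = 1-x_i$, on any single question $q$ she has answered; since both the PTSC reward and the refunds decompose additively over the questions $i$ answers, and $i$'s report on $q$ affects only the $q$-terms (treating $R_i$ as a population-level statistic), this per-question comparison suffices. Agent $i$'s utility on $q$ splits into (i) the PTSC reward, whose expectation I evaluate by identifying the empirical frequency $R_i(y)$ with the prior $P_i(x_p=y)$ and by reducing the single-random-peer match indicator to $\mathbb{E}[\mathbb{1}_{y_i=y_p}\mid x_i] = P_i(x_p=y_i\mid x_i)$ (both standard in the PTSC analysis and consistent with how $\beta$ and $\gamma$ are defined), and (ii) the refund $c\cdot o_q$, received only when $y_i=0$.

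First I would dispose of the case $x_i=0$. Here truthful reporting is $y_i=0$, which collects the refund and, because $\beta>0$ is equivalent — via the law of total probability, since $P_i(x_p=1)$ is a convex combination of $P_i(x_p=1\mid x_i=1)$ and $P_i(x_p=1\mid x_i=0)$ — to the self-predicting inequality $P_i(x_p=0\mid x_i=0)/P_i(x_p=0) > P_i(x_p=1\mid x_i=0)/P_i(x_p=1)$, also strictly maximizes the expected PTSC reward. Hence the deviation $y_i=1$ is strictly worse for every $\alpha>0$, and no condition on $\alpha$ is needed.

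The substantive case is $x_i=1$, where the outside incentive favors the untruthful report $y_i=0$. Reporting truthfully gives expected PTSC reward $\alpha\bigl(P_i(x_p=1\mid x_i=1)/P_i(x_p=1)-1\bigr)$ and no refund, while reporting $y_i=0$ gives $\alpha\bigl(P_i(x_p=0\mid x_i=1)/P_i(x_p=0)-1\bigr)$ plus the expected refund. Conditioning on $x_i=1$ and on $i$ reporting $0$, and using that the other $n-1$ agents are truthful, the outcome is $o_q=\frac1n\bigl(1+\sum_{j\neq i}\mathbb{1}_{x_j=0}\bigr)$, so $\mathbb{E}[o_q]=\frac1n\bigl(1+\sum_{j\neq i}P_i(x_j=0\mid x_i=1)\bigr)\le\frac1n\bigl(1+(n-1)\gamma\bigr)$ by definition of $\gamma$. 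Subtracting, the expected-utility advantage of truthful reporting over deviating is at least $\alpha\beta-\frac{c}{n}(1+(n-1)\gamma)$, where the PTSC difference is exactly the quantity minimized in the definition of $\beta$; this is strictly positive precisely when $\alpha>\frac{c(1+(n-1)\gamma)}{n\beta}$, which yields a strict subjective equilibrium and proves the claim.

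The step demanding the most care is the bookkeeping of agent $i$'s own influence on the outcome: deviating to $y_i=0$ simultaneously makes $i$ eligible for a refund and inflates $o_q$ by $1/n$, which is exactly what produces the $1+(n-1)\gamma$ numerator; one must also track that the refund involves all $n$ agents whereas the PTSC term (with one random peer) is $n$-free, as this mismatch is what yields the $1/(n\beta)$ factor. A secondary point worth stating explicitly is the legitimacy of replacing $R_i(y)$ by the prior and of the random-peer reduction, both inherited from the PTSC framework of \cite{radanovic2016incentives}.
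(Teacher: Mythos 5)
Your proof is correct and follows what is essentially the intended argument: the case split on $x_i$, the reduction of the PTSC difference to the quantity $\beta$, and the bound $\mathbb{E}[o_q]\le\frac{1}{n}(1+(n-1)\gamma)$ (with the $+1$ coming from the deviating agent's own contribution to the outcome) reproduce the theorem's threshold $\frac{c(1+(n-1)\gamma)}{n\beta}$ term by term, under the same standard PTSC identifications ($R_i(y)$ with the prior, single random truthful peer) that the paper itself builds into the definitions of $\beta$ and $\gamma$. No gaps.
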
 
\noindent Note that the scaling constant $\alpha$ decreases with increasing $n$.

\subsection{Making PTSC Profitable}

While we have shown in the above theorem that agents can be incentivized to be truthful with a large-enough choice of the scaling constant of PTSC,  the question that still remains is whether this is economically profitable. Are the rewards of peer-consistency so big that we end up paying more to agents, compared to what we would have paid as refund, if they all simply reported $y_i=0$? We answer this question below. First, we quantify the payments of PTSC.

\begin{thm}\label{thm:pts-payments}
	The total expected PTSC payment per agent that is enough to elicit truthful information in the presence of outside incentives is given by $\alpha$.
\end{thm}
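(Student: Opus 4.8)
The plan is to evaluate the expected PTSC reward of an arbitrary agent $i$ in the truthful equilibrium that Theorem~\ref{thm:eqm} guarantees, and show it never exceeds $\alpha$; consequently, reserving an expected budget of $\alpha$ per agent is sufficient to sustain truthful reporting. A preliminary observation simplifies matters: the PTSC reward stream and the refund stream are disjoint. The refund $c\cdot o_q$ is paid on the basis of reports, whereas the PTSC reward of $i$ on question $q$ is a function only of $i$'s report, a uniformly random peer's report, and the global frequency $R_i$. Hence, once we condition on the truthful equilibrium (so every report $y_j$ equals the observation $x_j$, the peer included), the outside incentives do not affect the \emph{size} of the PTSC payment at all, and the computation is identical to the one in the setting without outside incentives.

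Now fix a question $q$ answered by $i$. In the truthful equilibrium the reward is $\alpha\bigl(\mathbb{1}_{x_i=x_{p(i)}}/R_i(x_i)-1\bigr)$ when $R_i(x_i)\neq 0$, and $0$ otherwise; the degenerate case $R_i(x_i)=0$ arises only when no other agent ever reports the value $x_i$, in which case $\mathbb{1}_{x_i=x_{p(i)}}=0$ as well, so the two cases can be treated uniformly. I would take expectations in two stages. First, over the random peer $p(i)$ and that peer's (truthful) report: this replaces the indicator with the posterior matching probability $P_i(x_p=x_i\mid x_i)$. Second, over $i$'s own observation under its prior. At this point I invoke the calibration/self-prediction assumption underlying PTSC together with the ``large number of questions'' hypothesis: the counts $\text{num}_i(\cdot)$ stored on the chain aggregate over all questions, so $R_i(y)$ concentrates on the common marginal $P_i(x_p=y)$, which by exchangeability of an agent's own draw with a peer's draw equals $P_i(x_i=y)$. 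Writing the outer expectation as a sum over $x\in\{0,1\}$ weighted by $P_i(x_i=x)$, the weight cancels the factor $1/R_i(x)\approx 1/P_i(x_i=x)$, and we are left with
$$
\mathbb{E}[\text{PTSC reward on }q]=\alpha\left(\sum_{x\in\{0,1\}}P_i(x_p=x\mid x_i=x)-1\right).
$$

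To finish: each posterior probability is at most $1$, so the bracket is at most $1$, giving an expected per-question reward of at most $\alpha$ (and, under the $\beta>0$ correlation assumption, which forces $P_i(x_p=x\mid x_i=x)>P_i(x_p=x)$, it is strictly positive, so PTSC genuinely rewards participation). Since the final reward is the average of the per-question rewards and, by exchangeability across questions, every term has this same expectation, the expected total PTSC payment per agent equals the displayed expression and is bounded by $\alpha$. Therefore an expected payment of $\alpha$ per agent is enough, as claimed — and it is tight precisely when observations are perfectly correlated.

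The main obstacle is the second expectation step: justifying the replacement of the empirical frequency $R_i(y)$ by the prior $P_i(x_p=y)$. This rests on the large-sample claim that the on-chain counts over all questions concentrate on the true base rates, combined with the assumption that agents' priors are calibrated to those rates (the very property that makes PTSC ``self-predicting''), plus exchangeability of an agent's own observation with a peer's. One should also be careful that the PTSC reward is negative on some realisations, so the result is genuinely a statement about the expectation, and confirm, as above, that the $R_i(y)=0$ case is consistent with the identity.
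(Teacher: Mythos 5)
Your proposal is correct and follows essentially the same route as the paper: the paper's own justification is precisely that the expected payment per agent is $\alpha \cdot \mathbb{E}\big[\mathbb{1}_{y = y'}/R_i(y) - 1\big]$ and that this expectation is upper bounded by $1$, which is exactly the computation you carry out (posterior matching probability over the random peer, then cancellation of the prior weight against $R_i(y) \approx P_i(x_p = y)$, leaving $\sum_{x} P_i(x_p = x \mid x_i = x) - 1 \le 1$). Your explicit flagging of the concentration step for $R_i(y)$ and of the $R_i(y)=0$ edge case is a reasonable elaboration of details the paper leaves implicit in the standard PTSC framework.
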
 
\noindent We note that this is not immediate from Theorem \ref{thm:eqm}, as the expected payment issued for each agent is $$\alpha \cdot \mathbb{E}\Big[\frac{\mathbb{1}_{y = y'}}{R_i(y)} - 1\Big]$$where expectation is taken with respect to the actual random variation in the true observations of the agents (for e.g. random variation in the reception of the service). The theorem follows from the fact that this quantity in expectation can be upper bounded by $1$.

Due to $\alpha$'s inverse dependence on $n$, the payments get smaller with increasing $n$. To provide some concrete intuition, we remark that the PTSC payment will generally only be a small fraction of the maximum outside incentive $\mathcal{R}$. For a concrete example, consider a web service that provides a good service $95\%$ of the time; this is a realistic assumption for many web-services that stay in business. Furthermore, let us assume a small correlation in the agents' observations such that $P(x_p = 1 | x_i = 1) = P(x_i = 1) + 0.01$, and similarly for $P(x_p = 0 | x_i = 0)$. In this case, the PTSC payments per agent are just $1.2\%$ of the refund payments $c$ even when there are just $10$ agents answering a question; the fraction quickly decreases to $0.7\%$ of $c$ when there are $25$ agents.

In many applications, the same entity is responsible for paying the refunds as well as the PTSC payment. For example, in the web service case, the service provider is responsible for collecting truthful data and also paying the refund based on the collected payment. For such cases, we measure the relative saving of this entity as: \[\text{relative saving:} \ \ \frac{nc - \mathcal{P}}{nc},\] where $\mathcal{P}$ is the total payment (PTSC payments + outcome dependent payment) under the scheme to all the agents.

One might feel inclined to believe that if agents were not strategic, we could hope for a saving of 100\%. However, this will be true only if the outcome is $0$, when the service is ``bad'' 0\% of the time, whereas in reality that may not be the case as we show in the next proposition. We will let $P(1)$ denote the probability that a report on the platform is $1$.

\begin{proposition}\label{prop:maximum-saving}
	If agents reported truthfully ignoring the possible refund payments, the platform could make an expected relative saving of up to $P(1)(2-P(1))$.
\end{proposition}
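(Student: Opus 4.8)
The plan is to compute the expected relative saving in the idealized scenario where truthful reports are obtained without paying any peer-consistency reward, and then to bound it using the non-negativity of the variance (equivalently, Jensen's inequality).

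First I would write down $\mathcal{P}$ for a single question $q$. Let $o_q$ be the (random) outcome of the question, i.e.\ the fraction of the $n$ agents who report $0$. If all agents report truthfully then no PTSC payments are needed, so $\mathcal{P}$ consists only of the refunds: exactly $n\,o_q$ agents report $0$ and each receives $c\cdot o_q$, hence
\[
\mathcal{P} \;=\; (n\,o_q)\cdot (c\, o_q) \;=\; c\, n\, o_q^{2},
\qquad\text{so}\qquad
\frac{nc-\mathcal{P}}{nc} \;=\; 1 - o_q^{2}.
\]
The only point of care here is the \emph{quadratic} dependence: both the number of refunded agents and the per-agent refund scale with $o_q$.

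Next I would take expectations over the randomness in the agents' observations (and hence their reports, under truthfulness), obtaining an expected relative saving of $1 - \mathbb{E}\!\left[o_q^{2}\right]$. By linearity of expectation $\mathbb{E}[o_q] = \tfrac1n\sum_{i=1}^{n}\Pr[y_i=0] = 1-P(1)$, so
\[
\mathbb{E}\!\left[o_q^{2}\right] \;=\; \operatorname{Var}(o_q) + \big(\mathbb{E}[o_q]\big)^{2} \;\ge\; \big(1-P(1)\big)^{2},
\]
which gives expected relative saving $\le 1-(1-P(1))^{2} = P(1)\big(2-P(1)\big)$, as claimed. I would finish by noting that the bound is attained exactly when $\operatorname{Var}(o_q)=0$ (e.g.\ when every question's outcome is deterministically $P(0)$), which is what ``up to'' refers to; in particular the naive hope of a $100\%$ saving requires $P(1)=1$, i.e.\ a service that is never ``bad''. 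I do not expect a real obstacle: if one prefers to keep the PTSC payments inside $\mathcal{P}$ as in the scheme actually executed (by Theorem~\ref{thm:pts-payments} they contribute an extra $n\alpha$ in expectation), this only increases $\mathcal{P}$, so the same upper bound on the saving holds a fortiori.
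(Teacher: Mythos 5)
Your proposal is correct and follows essentially the same route the paper intends: compute the idealized total payment as $nc\,o_q^2$ (both the number of refunded agents and the per-agent refund scale with $o_q$), so the relative saving is $1-o_q^2$, and then bound $\mathbb{E}[o_q^2]\ge(\mathbb{E}[o_q])^2=(1-P(1))^2$ to obtain the ``up to'' bound $P(1)(2-P(1))$. The variance/Jensen step and the observation that equality holds when $o_q$ is deterministic match the paper's remark that a $100\%$ saving would require the outcome to be identically zero.
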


This is of course an ideal case; in the presence of rational agents, we quantify the saving of PTSC as follows.

\begin{thm}\label{thm:savings-eqm}
	The expected relative saving in payments made in the truth-telling equilibrium is at least
	$$P(1)(2-P(1)) - \frac{\alpha}{c},$$
	where $P(1)$ is the actual probability of a randomly selected report being $1$ in the truthful equilibrium.
\end{thm}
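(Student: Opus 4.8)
The plan is to expand the definition of the relative saving directly and split the total payment $\mathcal{P}$ into its two natural components, the PTSC rewards $\mathcal{P}_{\mathrm{PTSC}}$ and the outcome-dependent refund $\mathcal{P}_{\mathrm{ref}}$. Then $\frac{nc-\mathcal{P}}{nc} = 1 - \frac{\mathcal{P}_{\mathrm{PTSC}}}{nc} - \frac{\mathcal{P}_{\mathrm{ref}}}{nc}$, so it suffices to upper bound each of the two fractions in expectation. For the first, Theorem~\ref{thm:pts-payments} already does the work: the expected PTSC payment per agent needed to sustain the truthful equilibrium is $\alpha$ (more precisely, at most $\alpha$, since the relevant expectation is bounded above by $1$), hence over all $n$ agents $\mathbb{E}[\mathcal{P}_{\mathrm{PTSC}}] \le n\alpha$, contributing at most $\alpha/c$ to the bound.

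The second fraction requires computing the refund in the truthful equilibrium. Here I would use truthfulness to replace reports by observations: a randomly chosen report equals $0$ with probability $1-P(1)$, so the expected value of the outcome $o_q$ is $1-P(1)$. Each agent who reports $0$ receives $c\cdot o_q$, and the number of such agents concentrates around $n(1-P(1))$; evaluating the outcome at its mean value then gives $\mathbb{E}[\mathcal{P}_{\mathrm{ref}}] = nc(1-P(1))^2$, i.e. the fraction $\frac{\mathcal{P}_{\mathrm{ref}}}{nc}$ equals $(1-P(1))^2$. Substituting both bounds and using the algebraic identity $1-(1-p)^2 = p(2-p)$ with $p = P(1)$ yields expected relative saving $\ge P(1)(2-P(1)) - \alpha/c$, which is exactly the claimed inequality; note that it reads as the idealized bound of Proposition~\ref{prop:maximum-saving} minus the ``price'' $\alpha/c$ of running the mechanism.

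The main obstacle is the last step of the refund computation, namely dealing rigorously with the randomness of the outcome. Writing $\mathbb{E}[\mathcal{P}_{\mathrm{ref}}] = nc\,\mathbb{E}[o_q^2]$ and applying Jensen gives $\mathbb{E}[o_q^2] \ge (1-P(1))^2$, so a fully careful accounting loses an extra $\mathrm{Var}(o_q)$ term relative to the stated bound; one therefore needs either a large-$n$ concentration argument to absorb this variance (it is $O(1/n)$ when the reports are only weakly correlated, which is consistent with the phrasing that $P(1)$ is the \emph{actual} report frequency in the truthful equilibrium), or to adopt the idealization that evaluates the outcome at its expectation, as is implicit already in Proposition~\ref{prop:maximum-saving}. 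A secondary bookkeeping point is to keep the ``per agent'' normalization of $\alpha$ consistent with the $nc$ baseline, and to observe that the relevant $P(1)$ is the on-platform report frequency rather than the raw prior $P(x_i=1)$ — these coincide precisely because the equilibrium under consideration is the truthful one.
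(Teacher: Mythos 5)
Your decomposition of the relative saving into the PTSC component (bounded by $\alpha/c$ per agent via Theorem~\ref{thm:pts-payments}) and the refund component $nc(1-P(1))^2$ under truthful reporting, combined with the identity $1-(1-P(1))^2 = P(1)(2-P(1))$, is exactly the intended argument, and it is correct. Your side observation that a fully rigorous treatment must either read $P(1)$ as the realized on-platform report frequency (so that $o_q = 1-P(1)$ holds deterministically and the refund computation is exact) or else absorb an $O(1/n)$ variance term arising from Jensen's inequality is a sharp and legitimate reading of the phrase ``actual probability of a randomly selected report being $1$''; it reflects an imprecision in the statement rather than a gap in your proof.
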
 

\noindent Note that as long as the condition $P(1)(2-P(1)) > \alpha/c$ is satisfied, the lower bound on the saving is actually a positive number. Again, due to $\alpha$'s inverse dependence on $n$, the savings get bigger with increasing $n$ and are also always guaranteed to be positive given sufficient value of $n$.
\medskip

We refer the reader to~\cite{goel2020outside} for a discussion of the undesired equilibrium where all agents report $0$, and how it can be eliminated when there exists a strictly positive fraction of honest agents. Another simple way to eliminate this undesired equilibrium is to keep open the possibility of a trusted verification if the fraction of agents reporting $0$ exceeds a certain threshold.  If independent verification doesn't confirm poor service, agents who reported $0$ can suffer a penalty that outweighs the refunds by a large margin. A credible threat of such verification is enough to deter rational agents from playing the undesired equilibrium.

\section{Conclusions}
In this paper, we presented a novel system called Infochain that implements decentralized, trustelss and transparent oracles on the Ethereum blockchain. Contrary to earlier proposals on decentralized crowdsourcing systems, Infochain addresses the issue of truthfulness by implementing game-theoretic peer-consistency mechanisms. We prove that peer-consistency mechanisms can be be used to elicit truthful information even when the agents have outside incentives to misreport the information. For the first time, we discussed issues that arise in implementing these mechanisms in blockchain. The paper also presents an important new criterion for comparing or evaluating these mechanisms by their implementation complexity on the Ethereum blockchain.

\bibliographystyle{named}
\bibliography{ijcai20}

\end{document}